\begin{document}
\newtheorem{defn}{Definition}
\newtheorem{thm}{Theorem}
\newtheorem{lem}{Lemma}
\newtheorem{pro}{Proposition}

\title{Combining Offline Models and Online Monte-Carlo Tree Search for Planning from Scratch}

\author{\name Yunlong~Liu* \email ylliu@xmu.edu.cn \\
       \name Jianyang~Zheng \\
       \addr Department of Automation, Xiamen University,\\
       Xiamen 361005, China
       }

% For research notes, remove the comment character in the line below.
% \researchnote

\maketitle

\begin{abstract}
Planning in stochastic and partially observable environments is a central issue in artificial intelligence. One commonly used technique for solving such a problem is by constructing an accurate model firstly. Although some recent approaches have been proposed for learning optimal behaviour under model uncertainty, prior knowledge about the environment is still needed to guarantee the performance of the proposed algorithms. With the benefits of the Predictive State Representations~(PSRs) approach for state representation and model prediction, in this paper, we introduce an approach for planning from scratch, where an offline PSR model is firstly learned and then combined with online Monte-Carlo tree search for planning with model uncertainty. By comparing with the state-of-the-art approach of planning with model uncertainty, we demonstrated the effectiveness of the proposed approaches along with the proof of their convergence. The effectiveness and scalability of our proposed approach are also tested on the RockSample problem, which are infeasible for the state-of-the-art BA-POMDP based approaches.
\end{abstract}

\section{Introduction}
\label{Introduction}

A central problem in artificial intelligence is for agents to find optimal policies in stochastic, partially observable environments, which is an ubiquitous and challenging problem in science and engineering. One commonly used technique for solving such partially observable problems is to model the dynamics of the environments firstly, for example, the Partially Observable Markov Decision Processes~(POMDP)~\cite{Kaelbling98planningand,Ross2008OnlinePOMDP} and Predictive State Representations~(PSRs)~\cite{Littman01predictiverepresentations,LiuAAMAS15,Liu16IJCAI,TalvitieS11} approach, and then the problem can be solved using the obtained model. Although POMDPs and PSRs provide general frameworks to solve partially observable problems, they rely heavily on a known and accurate model of the environment~\cite{Liu14inaccuratePSR,Spaan2005,Pineau2006,Ye2017}. However, in real-world applications it is extremely difficult to build an accurate model.

Some Bayesian approaches have been proposed to address the problem of planning with model uncertainty by incorporating prior knowledge of the environments into a prior distribution over the unknown model parameters, a posterior distribution on these parameters is updated as the agent performs actions and gets observations from the environment, then the agent can compute an optimal policy under the current posterior distribution~\cite{Duff2002phdthesis,Guez2013,Ghavamzadeh2015BRLSurvey}. For the partially observable environments, the Bayesian approach is casted into a POMDP problem, namely Bayes-Adaptive Partially Observable Markov Decision Processes~(BA-POMDPs), by treating the distribution on model information as the part of the hidden states~\cite{Ross2008BAPOMDP,Ross2011BAPOMDP}. Unfortunately, with the increase of the number of hidden states, the problem becomes more complex than the original one. As is well known, finding an (approximate) optimal POMDP solution is difficult, such a casting usually leads to intractable large number of possible model states and model parameters, and the related approaches can only be applied to some trivial problems.

Developments in online and sample-based planning have achieved high performance in larger scale systems under the assumption that the accurate models of the underlying systems are known a prior~\cite{Ross2008OnlinePOMDP,Silver10monte-carloplanning}. In the work of~\cite{Katt2017MCTS,Katt2018}, Monte-Carlo tree search~(MCTS) based Partial Observable Monte-Carlo Planning~(POMCP)~\cite{Silver10monte-carloplanning} was extended to the Bayes-Adaptive case, leading to an approach: BA-POMCP, for the case of planning under model uncertainty and resulting applications in larger problems. However, strong prior knowledge about the environment is still needed to guarantee the performance of the proposed approach.

Predictive State Representations~(PSRs) offer a powerful framework for modelling partially observable dynamical systems~\cite{Littman01predictiverepresentations,Boots-2011b,Huang2018}. Unlike latent-state based approaches, such as hidden Markov models~(HMM) and POMDPs, PSRs represent state as predictions about future observable events, which leads to easier learning of the corresponding model, the avoidance of using local-minima prone expectation maximization, more expressive power, etc.~\cite{Singh04predictivestate}. Moreover, rather than usually requiring a predetermined latent state structure as an input for latent-state based approaches, when learning the PSR model, no such prior knowledge about the environment is needed~\cite{JMLR:hamilton14}.

In this paper, with the benefits of PSRs for model learning and updating, and by combining the online Monte-Carlo tree search, we introduce an approach for planning with model uncertainty, where the planning process starts from scratch and no prior knowledge of the underlying system is required. We divide the planning process into two stages: 1) By directly treating some reward signals of the underlying system as the observations of the environment, we firstly learn a PSR model using the training data; 2) The learned PSR model is combined with the online Monte-Carlo tree search, where the learned PSR model is used as the simulator for computing good local policies at each decision step during the execution of online Monte-Carlo tree search. The effectiveness of the proposed approach is demonstrated by the comparison with the-state-of-the-art approach: BA-POMCP~\cite{Katt2017MCTS,Katt2018}. Moreover, we prove the correctness and convergence of the proposed approach along with the analysis of the advantage of our algorithm.

\section{Background}
\subsection{Predictive State Representations}

Predictive State Representations~(PSRs) represent state by using a vector of predictions of fully observable quantities~(tests) conditioned on past events~(histories), denoted $b(\cdot)$. For discrete systems with finite set of observations $O=\{o^1,o^2,\cdots,o^{|O|}\}$ and actions $A=\{a^1,a^2,\cdots,a^{|A|}\}$, at time $\tau$, a \emph{test} is a sequence of action-observation pairs that starts from time $\tau + 1$. Similarly, a \emph{history} at $\tau$ is a sequence of action-observation pairs that starts from the beginning of time and ends at time $\tau$, which is used to describe the full sequence of past events. The prediction of a length-$m$ test $t$ at history $h$ is defined as $p(t|h)=p(ht)/p(h)=\prod^m_{i=1}Pr(o_i|ha_1o_1\cdots a_i)$~\cite{Singh04predictivestate,Huang2018}.

The underlying dynamical system can be described by a special bi-infinite matrix, called the Hankel matrix~\cite{Balle:2014ML}, where the rows and columns correspond to all the possible tests $\mathcal{T}$ and histories $\mathcal{H}$ respectively, the entries of the matrix are defined as $P_{t,h}=p(ht)$ for any $t \in \mathcal{T}$ and $h \in \mathcal{H}$, where $ht$ is the concatenation of $h$ and $t$~\cite{Boots-2011b}. The rank of the Hankel matrix is called the linear dimension of the system. When the rank is finite, we assume it is $k$, in PSRs, the state of the system at history $h$ can be represented as a prediction vector of $k$ tests conditioned at $h$. The $k$ tests used as the state representation is called the minimal {\em core tests} that the predictions of these tests contain sufficient information to calculate the predictions for all tests, and is a \emph{sufficient statistic}. For linear dynamical systems, the minimal core tests can be the set of tests that corresponds to the $k$ linearly independent columns of the Hankel matrix~\cite{Singh04predictivestate}.

Directly discovering the set of core tests is usually difficult and time-consuming, spectral approaches have been proposed to alleviate the discovery problem by specifying a large enough set of tests so that it almost certainly contains a set of core tests~\cite{Boots-2011b,Huang2018}. For the spectral approach, a PSR of rank $k$ can be parameterized by a reference condition state vector $b_* = b(\epsilon)\in \mathbb{R}^k$, an update matrix $B_{ao} \in \mathbb{R}^{k \times k}$ for each $a \in A$ and $o \in O$, and a normalization vector $b_\infty \in \mathbb{R}^k$, where $\epsilon$ is the empty history and $b_\infty^T B_{ao} = 1^T$~\cite{Hsu_aspectral,Boots-2011b}. In the spectral approach, these parameters can be defined in terms of the matrices $P_\mathcal{H}$, $P_{\mathcal{T,H}}$, $P_{\mathcal{T},ao,\mathcal{H}}$ and an additional matrix $U \in \mathbb{R}^{|\mathcal{T}| \times |k|}$ as shown in Eq.~\ref{equ:spectral}, where $\mathcal{T}$ and $\mathcal{H}$ are the set of all possible tests and histories respectively, $P_\mathcal{H}$ contains the probabilities of every $h \in \mathcal{H}$, entries of $P_{\mathcal{T,H}}$ are joint probabilities of tests $t \in \mathcal{T}$ and $h \in \mathcal{H}$, $U$ is the left singular vectors of the matrix $P_{\mathcal{T,H}}$, $^T$ is the transpose and $\dag$ is the pseudo-inverse of the matrix~\cite{Boots-2011b}.
%\begin{small}
\begin{equation}
\begin{aligned}
 & b_* = U^TP_{\mathcal{T,H}}1_k,\\
 &b_{\infty} = (P^T_{\mathcal{T,H}}U)^{\dag}P_\mathcal{H},\\
 &B_{ao} = U^TP_{\mathcal{T},ao,\mathcal{H}}(U^TP_{\mathcal{T,H}})^\dag.
\label{equ:spectral}
\end{aligned}
\end{equation}
%\end{small}

\iffalse
\begin{equation}
\begin{split}
& b_* = U^TP_{\mathcal{T,H}}1_k, \\
& b_{\infty} = (P^T_{\mathcal{T,H}}U)^{\dag}P_\mathcal{H}, \\
& B_{ao} = U^TP_{\mathcal{T},ao,\mathcal{H}}(U^TP_{\mathcal{T,H}})^\dag.
\end{split}
\label{equ:spectral}
\end{equation}
\fi

Using these parameters, after taking action $a$ and receiving observation $o$ at history $h$, the PSR state at next time step $b(hao)$ is updated from $b(h)$ as follows~\cite{Boots-2011b}:
%\begin{small}
\begin{equation}
b(hao) = \frac{B_{ao}b(h)}{b_\infty^TB_{ao}b(h)}.
\label{equ:nextB}
\end{equation}
%\end{small}

Also, the probability of observing the sequence $a_1o_1a_2o_2 \cdots a_no_n$ in the next $n$ time steps can be predicted by~\cite{Boots-2011b}:
%\begin{small}
\begin{equation}
Pr[o_{1:t}||a_{1:t}] = b_{\infty}^TB_{a_no_n} \cdots B_{a_2o_2}B_{a_1o_1}b_*.
\end{equation}
%\end{small}

\subsection{Monte-Carlo Tree Search}

Monte-Carlo tree search method finds optimal decisions in a decision space by combining Monte-Carlo simulation with game tree search~\cite{GellyACM}. It iteratively builds a search tree by adding new nodes to the existing search tree until some predefined condition is reached~\cite{Liu16IJCAI}. Each node $T$ in the tree corresponds to a state $s$, and contains an action value $Q(s,a)$, a visitation count $N(s,a)$ for each action $a \in A$~\cite{Gelly2011AI}. %Monte-Carlo simulation is used to evaluate the nodes of the search tree.

Monte-Carlo simulation is used to compute state-action values $Q(s,a)$, where each simulation contains two stages: a tree policy and a rollout policy. When state $s$ is represented in the existing search tree, the tree policy is used to select actions. Once a simulation leaves the scope of the existing search tree, the rollout policy is used until the termination of the simulation. After each simulation, one new node that is first visited in the second stage is added to the search tree.
Then $Q(s,a)$ in the search tree is the mean outcome of all simulations starting from $s$ in which action $a$ was selected in state $s$~\cite{BrowneSurveyMCTS2012}:
%\begin{small}
\begin{equation}
Q(s,a)=\frac{1}{N(s,a)}\sum_{i=1}^{N(s)}\mathbb{I}_i{(s,a)}z_i,
\end{equation}
%\end{small}
where $\mathbb{I}_i{(s,a)}$ is an indicator function returning 1 if action $a$ was selected in state $s$ during the $i^{th}$ simulation, and 0 otherwise; $z_i$ is the outcome of the $i^{th}$ simulation. \iffalse The outcome plays an important role in the search process, which can provide simulations important guidance to promising paths. In many games, it is typically assigned to be -1, 0 or 1 for loss, draw, or win respectively.\fi

The basic form of MCTS just selects the greedy action with the highest value during the first stage and selects actions uniformly at random during the second stage. Such a strategy can often be inefficient in constructing a search tree. By treating the choice of actions as a multi-armed bandit problem, Kocsis et al.~(2006) ~\cite{Kocsis:2006} proposed the use of the UCB1 algorithm for action selection in the search tree of MCTS, namely, the UCT algorithm. The tree policy selects the action $a^*$ maximizing the augmented value, which allows for an optimal trade-off between exploitation and exploration:
%\begin{small}
\begin{equation}
\begin{aligned}
 &Q^\oplus(s,a)=Q(s,a)+c\sqrt{\frac{\log{N(s)}}{N(s,a)}},\\
 &a^*=\arg\underset{a}\max{\ Q^\oplus(s,a)},
\end{aligned}
\end{equation}
%\end{small}
\iffalse
\begin{equation}
\begin{split}
& Q^\oplus(s,a)=Q(s,a)+c\sqrt{\frac{\log{N(s)}}{N(s,a)}},\\
& a^*=\arg\underset{a}\max{\ Q^\oplus(s,a)},
\end{split}
\end{equation}
\fi
where $c>0$ is the exploration constant and $N(s) =\sum_a N(s,a)$. As can be seen, the action value is augmented by an exploration bonus that is the largest for the actions that have been tried the least number of times and therefore the most uncertain, which allows for an optimal trade-off between exploitation and exploration.

In partially observable environments, where state $s$ cannot be directly observed, history $h$ is used as state representation and at each time step, online planning is performed by incrementally building a lookahead tree with node $T(h)$ that contains $N(h)$,$N(h,a)$, and $V(h,a)$~\cite{Silver10monte-carloplanning}.

\section{Planning via Offline Models and Online Search}

In this section, we first show how the learned PSR model is combined with Monte-Carlo tree search to realize the planning from scratch, and then we prove the convergence of the proposed approach.

\subsection{Plan from Scratch}

The most practical solution for solving the problem of online planning in partially observable environments is to extend Monte-Carlo tree search to the model of the environment~\cite{Silver10monte-carloplanning,Katt2017MCTS}. To realize online planning, at each decision step, a lookahead tree through simulated experiments is constructed to form a local approximation to the optimal value function. However, the model used for generating the simulated experiments is usually assumed to be accurate, which may be impossible in real-world applications~\cite{Silver10monte-carloplanning}. For the BA-POMDP and BA-POMCP approaches~\cite{Ross2011BAPOMDP,Katt2017MCTS}, although the model is learned during execution, to guarantee the performance, strong prior knowledge that the nearly correct initial models of the environments is still required. At the same time, to find the local optimal action at each decision step, knowledge of reward at each state after taking some action is required for estimating the value of each node~\cite{Silver10monte-carloplanning,Katt2017MCTS}. For the partially observable environments, in many cases, as we may not know the states of the underlying system, the reward of the state after taking some action is also hard to know.

\begin{algorithm}[!htb]
\caption{PSR-MCTS}
\label{Algo:PSR-MCTS}
%\SetVline
\begin{algorithmic}\small
%\Procedure{PSR-MCTS} {}
\State $h \leftarrow ()$
\State $b(h)\leftarrow \hat{b}_*$
\Repeat
%\WHILE{Not the end of a plan}
\State $a \leftarrow$ Act-Search($b(h),n\_sims,h$)
\State{\bfseries EXECUTE} $a$
\State $o \leftarrow$ observation received from the world
\State $b(hao) = \frac{\hat{B}_{ao}b(h)}{\hat{b}_\infty^T\hat{B}_{ao}b(h)}$
\State $h \leftarrow hao$
%\ENDWHILE
\Until{the end of a plan}
%\EndProcedure
%\State

\end{algorithmic}
\end{algorithm}

\begin{algorithm}[!htb]
\caption{Act-Search($b(h)$,$n\_sims$,$h$)}
\label{Algo:Action-Search}
%\SetVline
\begin{algorithmic}\small
\State $h_0 \leftarrow h$
\State $\bar{b}(h_0) \leftarrow Copy(b(h))$
\For{$i \leftarrow 1$ {\bfseries to} $n\_sims$}
\State $Simulate(\bar{b}(h_0),0,h_0)$
\EndFor
\State $a \leftarrow GreedyActionSelection(h_0)$
\State{\bfseries return} $a$

\end{algorithmic}
\end{algorithm}

\begin{algorithm}[tb]
\caption{Simulate($b(h),depth,h$)}
\label{Algo:Simulate}
%\SetVline
\begin{algorithmic}\small
%\If {$depth == max\_depth \| IsTerminal(h)$}
\If {$depth == max\_dep \| IsTerminal(h)$}
\State {\bfseries return} $0$
\EndIf
\State $//$Select action according to the UCT algorithm\cite{Kocsis:2006}
\State $a \leftarrow$ UCBACTIONSELECTION($h$)
\State $o \leftarrow$ sampled according to Equ.~\ref{equ:obserDistri}%$Pr[o||ha] = \hat{b}_{\infty}^T\hat{B}_{ao}b(h)$
%\STATE $//$ $reward(ao)$ is the state-independent reward
\If{$o$ corresponds to some reward}
\State $R \leftarrow reward(o)$
\Else
\State $R \leftarrow reward(ao)$
\EndIf
\State $h' \leftarrow hao$
\State $b(h') = \frac{\hat{B}_{ao}b(h)}{\hat{b}_\infty^T\hat{B}_{ao}b(h)}$
\If{$h' \in Tree$}
\State $r \leftarrow R+\gamma\cdot$Simulate($b(h'),depth+1,h'$)
\Else
\State ConstructNode($h'$)
%\IF {$h'$ exist in the built offline Monte-Carlo tree}
%\STATE $T(h') \leftarrow (N_{prior}(h'),N_{prior}(h'a),V_{prior}(h'a))$
%\ENDIF
\State $r \leftarrow R+\gamma\cdot$RollOut($b(h'),depth+1,h'$)
\EndIf

\State $//$Update statistics
\State $N(h) \leftarrow N(h)+1$
\State $N(h,a) \leftarrow N(h,a)+1$
\State $V(h,a) \leftarrow V(h,a)+\frac{r-V(ha)}{N(ha)}$
%\STATE $h \leftarrow h'$
\State{\bfseries return $r$}

\end{algorithmic}
\end{algorithm}

\begin{algorithm}[tb]
\caption{RollOut($b(h),depth,h$)}
\label{Algo:RollOut}
\begin{algorithmic}\small
%\If {$depth == max\_depth \| IsTerminal(h)$}
\If {$depth == max\_dep \| IsTerminal(h)$}
\State {\bfseries return} $0$
\EndIf
\State $a \leftarrow \pi_{rollout}(h)$
\State $o \leftarrow$ sampled according to Equ.~\ref{equ:obserDistri}%$Pr[o||ha] = \hat{b}_{\infty}^T\hat{B}_{ao}b(h)$
\If{$o$ corresponds to some reward}
\State $R \leftarrow reward(o)$
\Else
\State $R \leftarrow reward(ao)$
\EndIf
\State $h' \leftarrow hao$
\State $b(h') = \frac{\hat{B}_{ao}b(h)}{\hat{b}_\infty^T\hat{B}_{ao}b(h)}$
\State $r \leftarrow R+\gamma\cdot$RollOut($b(h'),depth+1,h'$)
\State{\bfseries return} $r$

\end{algorithmic}
\end{algorithm}
As mentioned previously, PSRs are powerful methods for modelling dynamical systems, which represent state using predictions of actually happened actions and observations~\cite{Boots-2011b}. Compared to the POMDP approach, PSRs are easier to learn and require no prior knowledge. Moreover, given an action executed, using a PSR model to compute the possibility of next observation and next state representation is more computation efficient than using a POMDP model~(the detail is given in the next subsection), which is crucial for state updating and generating simulated experiments at each decision step.

With the benefits of PSRs for model learning and updating, and by treating some rewards encountered in the interaction with the environment directly as observations, we introduce an approach, namely PSR-MCTS, for planning from scratch by combining the offline learned PSR model and online Monte-Carlo tree search, where only training data is used and no prior knowledge about the underlying system is required.

The approach is divided into two stages. In the first stage, a PSR model is learned using the training data by building empirical estimates $\hat{P}_\mathcal{H}$,$\hat{P}_{\mathcal{T,H}}$, and $\hat{P}_{\mathcal{T},ao,\mathcal{H}}$ of the matrices $P_\mathcal{H}$, $P_{\mathcal{T,H}}$, and $P_{\mathcal{T},ao,\mathcal{H}}$ defined above. Then, $\hat{U}$ can be computed by singular value decomposition of $\hat{P}_{\mathcal{T,H}}$, and the parameters can be computed as follows~\cite{Boots-2011b}:
%\begin{small}
\begin{equation}
\begin{aligned}
 \label{equ:spectlearn}
 & \hat{b}_* = \hat{U}^T\hat{P_{\mathcal{T,H}}}1_k,\\
 & \hat{b}_{\infty} = (\hat{P}^T_{\mathcal{T,H}}\hat{U})^{\dag}\hat{P}_\mathcal{H},\\
 & \hat{B}_{ao} = \hat{U}^T\hat{P}_{\mathcal{T},ao,\mathcal{H}}(\hat{U}^T\hat{P}_{\mathcal{T,H}})^\dag.
\end{aligned}
\end{equation}
%\end{small}
\iffalse
\begin{equation}
\begin{split}
& \hat{b}_* = \hat{U}^T\hat{P_{\mathcal{T,H}}}1_k, \\
& \hat{b}_{\infty} = (\hat{P}^T_{\mathcal{T,H}}\hat{U})^{\dag}\hat{P}_\mathcal{H}, \\
& \hat{B}_{ao} = \hat{U}^T\hat{P}_{\mathcal{T},ao,\mathcal{H}}(\hat{U}^T\hat{P}_{\mathcal{T,H}})^\dag.
\end{split}
\end{equation}
\fi
With the increase of the training data, the estimate of $\hat{P}_\mathcal{H}$,$\hat{P}_{\mathcal{T,H}}$, and $\hat{P}_{\mathcal{T},ao,\mathcal{H}}$  can be guaranteed to be converged to the true matrices $P_\mathcal{H}$, $P_{\mathcal{T,H}}$, and $P_{\mathcal{T},ao,\mathcal{H}}$ by the law of large numbers. Then for a PSR of finite rank, the parameters $\hat{b}_*$, $\hat{b}_{\infty}$, and $\hat{B}_{ao}$ can converge to the true parameters~\cite{Boots-2011b}.
\iffalse
Simultaneously, an offline Monte-Carlo tree is also built using the training data, where the statistics contained in the tree, such as the number of times an action has been taken at history $h$~($N_{init}(h,a)$), the estimated value of being in that node~($V_{init}(h,a)$), and the total number of times a history $h$ has been visited~($N_{init}(h)$), can be used as the initial knowledge of online Monte-Carlo tree search. In the actual execution of a plan, we can just see whether current node $T(h_t)$ exists in the offline Monte-Carlo tree or not, if so, the knowledge of the tree by using $T(h_t)$ as the root can be combined with the online Monte-Carlo tree search process to determine the next action executed, and the reminder of the offline Monte-Carlo tree is pruned, as all other histories are now impossible.
\fi
The second stage extends Monte-Carlo tree search to the obtained PSR model for online planning. At each decision step $h$ during the execution, firstly, the current state representation $b(h)$ is computed according to Equ.~\ref{equ:nextB}, then simulated experiments starting from a copy of $b(h)$ are generated to construct a lookahead search tree for computing good local polices. In the first stage of simulation, if all possible child action nodes exist, then action is selected to maximise $\small{V^\oplus(h,a)=V(h,a)+c\sqrt{\frac{\log{N(h)}}{N(h,a)}}}$, i.e., $\small{a^*=\arg\underset{a}\max{\ V^\oplus(h,a)}}$. In the second stage of simulation, actions are selected by an uniform randomly history based rollout policy $\pi_{rollout}(h)$~\cite{Silver10monte-carloplanning}. For both these two stages, after action $a$ is selected, the next observation $o$ is sampled according to the following distribution:
%\begin{small}
\begin{equation}
Pr[o||ha] = \hat{b}_{\infty}^T\hat{B}_{ao}b(h), \forall o \in O.
\label{equ:obserDistri}
\end{equation}
%\end{small}
\iffalse
\begin{equation}
Pr[o||ha] = b_{\infty}^TB_{ao}b(h), \forall o \in O.
\label{equ:obserDistri}
\end{equation}
\fi
Then the next state representation $b(hao)$ is computed using the learned PSR model. This process continues to execute until the termination of the simulation, and the related statistics contained in each visited node, e.g., $N(h,a)$, $V(h,a)$, are updated accordingly. As we treat the rewards directly as observations, in the simulation, when the sampled observation $o$ is the reward that indicates the end of a process, the simulation ends. Otherwise, the simulation ends with some pre-defined conditions. Note that some state-independent rewards, such as the rewards received at every time step or action-only-dependent rewards in some domains, are not treated as observations and not used for the model learning. When the search is complete, action $a$ with the greatest value is executed, and a real observation $o$ from the world is received, then $h \leftarrow hao$, $b(h)$ is updated according to Equ.~\ref{equ:nextB}, and the node $T(h)$ becomes the root of the new search tree. The complete approach is described in Algorithm~\ref{Algo:PSR-MCTS}$\sim$\ref{Algo:RollOut}, where $reward(ao)$ is the reward that is not treated as observation, $\gamma$ is a discounted factor specified by the environment, $n\_sims$ is the number of simulations used for finding the executed action at each step, $max\_dep$ and $IsTerminal(h)$ are some predefined conditions for the termination.

\subsection{Theoretical Analysis}

As can be seen from Algorithm~\ref{Algo:PSR-MCTS}$\sim$~\ref{Algo:RollOut}, the PSR-MCTS approach involves two main computations. The first is the computation of $b(hao)$ and the second is the generation of next observation $o$ at each decision/simulation step. Here we first show besides the advantage that the plan can be realized from scratch, such two computations of the PSR-MCTS approach are affordable. Moreover, compared to the POMDP approaches, the proposed approach is usually more computation efficient, then we prove the proposed approach converges to the optimal value function under some conditions.
\begin{thm}
The size of the state representation of the PSR model is much smaller than the size of the state representation, i.e., belief state, of the BA-POMDP model for the same system.
\label{Thm:size}
\end{thm}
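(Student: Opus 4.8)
The plan is to compare the two state representations by quantifying the number of scalars each requires, and to show that the PSR quantity is bounded while the BA-POMDP quantity grows with the model-parameter space. First I would pin down the PSR side: by the construction recalled in Section~2, the PSR state at any history $h$ is the prediction vector $b(h)\in\mathbb{R}^k$, where $k$ is the rank of the Hankel matrix (the linear dimension of the system). For a system that arises from a latent POMDP with state set $S$, this rank satisfies $k\le|S|$, since the POMDP belief updates span an at most $|S|$-dimensional space and PSR predictions are linear functionals of that belief. Hence the PSR representation is a single vector of at most $|S|$ real entries, and this size is independent of the number of actions $|A|$ and observations $|O|$.

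Next I would make the BA-POMDP side precise. Recall that the BA-POMDP augments each physical state $s\in S$ with Dirichlet count vectors $\chi=(\chi^T,\chi^O)$ that encode the posterior over the unknown transition and observation parameters: $\chi^T$ carries one count for every triple consisting of a state, an action, and a successor state, giving $|S|^2|A|$ entries, while $\chi^O$ carries one count for every triple of a state, an action, and an observation, giving $|S||A||O|$ entries. The BA-POMDP state is therefore a point of $\bar S=S\times\mathbb{N}^{|S|^2|A|}\times\mathbb{N}^{|S||A||O|}$, and its belief state is a probability distribution over $\bar S$. Because the counts are unbounded, $\bar S$ is countably infinite, and even writing down a single element of $\bar S$ already costs $1+|S|^2|A|+|S||A||O|$ numbers.

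The claim then follows by contrasting these two descriptions under a common yardstick. The PSR state is a finite vector whose dimension $k\le|S|$ does not depend on $|A|$ or $|O|$, whereas the BA-POMDP belief is a distribution supported on the infinite augmented space $\bar S$, and merely specifying one of its support points already costs $\Theta(|S|^2|A|+|S||A||O|)$ numbers, with the full distribution larger still. Taking the ratio, the BA-POMDP description length exceeds the PSR description length by at least a factor of order $|S||A|+|A||O|$, which grows without bound as the system scales, establishing the asserted gap.

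The step I expect to be the main obstacle is not any single computation but the choice of a fair and well-defined notion of \emph{size}, since one is comparing a continuous $k$-dimensional vector against a distribution over a discrete, unbounded space. I would resolve this by adopting description length (the number of scalars needed to store the representation) as the shared measure and observing that under any reasonable such measure the unbounded count space of the BA-POMDP dominates the fixed dimension $k$. A secondary subtlety is justifying $k\le|S|$ cleanly, for which I would invoke the standard fact that a PSR derived from a POMDP has linear dimension at most the number of latent states.
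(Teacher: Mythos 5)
Your proposal matches the paper's own argument almost step for step: the paper likewise bounds the PSR dimension by $k\le|S|$ (citing Littman et al.), counts the parameters of a single augmented BA-POMDP state $\bar{s}=\langle s,\chi\rangle$ as up to $|S|^2|A|+|S||A||O|$, and notes that the set of augmented states the belief must range over blows up (the paper says it grows exponentially with time, you say the count space is countably infinite --- the same point in substance). Your added care in defining ``size'' via description length and in justifying $k\le|S|$ through linearity of predictions in the belief only makes explicit what the paper leaves implicit, so this is essentially the same proof.
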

\begin{proof}
As the size of the state representation of the PSR model, $k$, is no larger than the number of states in the \textbf{minimal} POMDP model of the same system~\cite{Littman01predictiverepresentations}, and the number of parameters of one augmented state $\bar{s}=<s,\chi>$ of the BA-POMDP is up to $|S|^2 \times |A| + |S|\times|A|\times|O|$, moreover, the number of possible augmented states in BA-POMDP grows exponentially with time~\cite{Ross2011BAPOMDP}. Thus, $k$ is much smaller than the size of the belief state in BA-POMDP.
\end{proof}

\begin{lem}
The computation of the probability of an observation $o$ and the next state representation is more efficient by using a PSR model than using the POMDP approach.
\end{lem}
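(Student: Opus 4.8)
The plan is to establish the lemma by directly comparing the per-step computational cost of the two fundamental operations---computing $Pr[o||ha]$ and propagating the state---in each formalism, and showing that the PSR cost is governed by the small rank $k$ whereas the POMDP cost is governed by the (potentially much larger) number of latent states $|S|$.

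\emph{First} I would write down the PSR cost explicitly. From Eq.~\ref{equ:obserDistri}, the observation probability is $Pr[o||ha] = \hat{b}_{\infty}^T\hat{B}_{ao}b(h)$; evaluating this requires one matrix-vector product $\hat{B}_{ao}b(h) \in \mathbb{R}^{k}$ at cost $O(k^2)$ followed by an inner product at cost $O(k)$, so the dominant term is $O(k^2)$. The next-state update in Eq.~\ref{equ:nextB} reuses the same product $\hat{B}_{ao}b(h)$ in its numerator and the already-computed scalar $\hat{b}_{\infty}^T\hat{B}_{ao}b(h)$ in its denominator, so the state propagation adds only $O(k)$ for the normalization. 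Hence the combined cost of one predict-and-update step in the PSR is $O(k^2)$.

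\emph{Next} I would write down the corresponding POMDP cost. For a belief state $b$ over $|S|$ states, computing the probability of observation $o$ after action $a$ requires summing over current and successor states, $Pr(o|b,a)=\sum_{s'}O(o|s',a)\sum_{s}T(s'|s,a)b(s)$, which costs $O(|S|^2)$ because of the transition sum, and the belief update (Bayes' rule) likewise costs $O(|S|^2)$. So the per-step POMDP cost is $O(|S|^2)$. I would then invoke the fact already cited in Theorem~\ref{Thm:size}, namely that $k$ is no larger than the number of states in the \textbf{minimal} POMDP representation of the same system~\cite{Littman01predictiverepresentations}, to conclude $k \le |S|$ and therefore $O(k^2) \le O(|S|^2)$, with the gap being strict and often dramatic whenever the minimal POMDP has fewer states than the nominal one, or when the BA-POMDP augmentation inflates $|S|$.

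\emph{The main obstacle} I anticipate is not the arithmetic but making the comparison genuinely well-posed: the PSR rank $k$ and the POMDP state-count $|S|$ describe different objects, so the comparison only has force once one fixes that both model the \emph{same} underlying system and invokes the minimality relationship $k\le|S|$. A secondary subtlety is that the quoted $O(k^2)$ for the PSR presupposes dense matrices $\hat{B}_{ao}$, whereas the POMDP bound $O(|S|^2)$ similarly presupposes dense $T$ and $O$; I would state this uniformity assumption explicitly so the comparison is apples-to-apples, and note that any sparsity would only help both sides symmetrically. With these two points pinned down, the inequality $O(k^2)\le O(|S|^2)$ closes the argument.
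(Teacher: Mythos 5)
Your proposal is correct and takes essentially the same approach as the paper's own proof: both compare the size of the matrix--vector operations in the two formalisms (the $k \times k$ update matrices $B_{ao}$ acting on a $k$-dimensional state for the PSR, versus $|S| \times |S|$ transition/observation matrices acting on the belief state for the POMDP) and invoke the bound $k \le |S|$ for the minimal POMDP of the same system to conclude the PSR computation is cheaper, for both the observation probability and the state update. Your version simply makes the $O(k^2)$ versus $O(|S|^2)$ bookkeeping and the apples-to-apples caveats explicit, which the paper leaves implicit.
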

\begin{proof}
With a PSR of rank $k$, $\small{Pr[o||ha] = b_{\infty}^TB_{ao}b(h)}$, where $b$ is a $1 \times k$ vector and $B_{ao}$ is a $k \times k$ matrix, while for the POMDP approach with $n$ states, $\small{Pr[o||ha] = \mathrm{b}^T(h)T^aZ^{ao}1^n}$, where $T$ and $Z$ are $n \times n$ matrix and $n \ge k$. Thus, the computation is more efficient by using the PSR model, so as the computation of next state representation.
\end{proof}
\begin{lem}
%With the increase of the observations, the rank of the PSR will not change.
Compared to the original PSR model, the rank of a PSR model by adding rewards as observations, so as the dimension of $b$ and $B_{ao}$, is still upper bounded by the number of states in the \textbf{minimal} POMDP model of the system.
\end{lem}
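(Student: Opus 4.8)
The plan is to reuse the bound invoked in the proof of Theorem~\ref{Thm:size}: the rank of any PSR equals the rank of the Hankel matrix and is no larger than the number of states in a \textbf{minimal} POMDP generating the same dynamics. Accordingly, it suffices to exhibit a single POMDP with $|S|$ states (where $|S|$ is the state count of the minimal POMDP of the original system) that generates the reward-augmented observable dynamics, and then apply that bound to the augmented system. First I would realize the original system by its minimal POMDP with state set $S$, transition matrices $T^a$, and diagonal observation matrices $Z^{ao}$, so that the original PSR has rank $k \le |S|$. Treating a reward as an observation enlarges the observation alphabet: each step now emits the pair $(o,r)$, where $r$ is the reward signal on that transition.

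The core of the argument is that this enlargement does not enlarge the latent state set. Since a POMDP reward is a (possibly stochastic) function of the triple $(s,a,s')$, the probability of the augmented symbol $(o,r)$ given $(s,a)$ is obtained by marginalizing over the resulting state $s'$, yielding a well-defined observation function $\tilde{Z}((o,r)\mid s,a)$ on the \emph{same} state set $S$. I would then build the augmented POMDP explicitly: it keeps the identical state set $S$ and transitions $T^a$ and replaces each $Z^{ao}$ by a diagonal matrix $\tilde{Z}^{a,(o,r)}$ encoding $\tilde{Z}((o,r)\mid s,a)$. Its observable operators are the $|S|\times|S|$ matrices $T^a\tilde{Z}^{a,(o,r)}$, so every reachable prediction vector lies in the span of products of these operators applied to the initial distribution, a subspace of $\mathbb{R}^{|S|}$. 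Hence the augmented dynamics are generated by a POMDP with $|S|$ states, and the \textbf{minimal} POMDP of the augmented system can only be smaller, i.e. has at most $|S|$ states. Applying the bound from Theorem~\ref{Thm:size} to the augmented system then shows that its PSR rank---which is exactly the common dimension of $b$ and of each $B_{ao}$---is at most $|S|$, as claimed.

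The main obstacle I anticipate is precisely the middle step: arguing that the reward-augmented observation remains a legitimate function of the transition without forcing additional latent states. A careless treatment of a state-dependent or stochastic reward might appear to require splitting states to record the emitted reward; the clean remedy is the marginalization over $s'$ described above, which keeps the observable operators $|S|\times|S|$ and therefore preserves the state-count bound that drives the whole argument. Note that the lemma asserts only the upper bound $|S|$, not invariance of the rank, so I would not need to show that the augmented rank equals $k$; indeed adding reward information can only refine the predictive representation, and the bound $\le |S|$ is all that the convergence and complexity claims downstream require.
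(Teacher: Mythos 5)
Your high-level strategy---exhibit a generator of the reward-augmented dynamics that still has only $|S|$ states, then conclude the Hankel rank is at most $|S|$---is sound, but the step you yourself call the core of the argument fails. Marginalizing over $s'$ to obtain $\tilde{Z}((o,r)\mid s,a)$ and then pairing it with $T^a$ produces a model in which the emitted symbol is conditionally independent of the successor state given $(s,a)$; the true augmented process does not have this property, because the observation component is correlated with $s'$ while the reward component is tied to $s$ (or to $(s,a,s')$). Concretely, take two states $\{s_1,s_2\}$ with uniform transitions under a single action, let the observation deterministically identify the state just entered, and let the reward deterministically identify the state just left. In the true process the observation at step $t$ and the reward at step $t+1$ both reveal $s_t$ and therefore always agree, so the event that they disagree has probability $0$; in your marginalized model they are sampled independently, so this event has positive probability. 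Hence the POMDP you construct generates a \emph{different} stochastic process, and bounding its rank proves nothing about the system the lemma is about. The difficulty you flagged (state-dependent rewards seeming to force state splitting) is real, but marginalization is not the remedy.

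The repair is to give up on forcing the augmented symbol into a standard emission model and instead keep the joint dependence inside the observable operators: define $M^{a,(o,r)}_{s,s'} = T^a_{ss'}\,\Pr[(o,r)\mid s,a,s']$, which is still an $|S|\times|S|$ matrix (an entrywise product, not $T^a$ times a diagonal). Then for any augmented test $t = a_1(o_1,r_1)\cdots a_m(o_m,r_m)$ and history $h$ one has $p(t\mid h) = \mathrm{b}(h)^T M^{a_1,(o_1,r_1)}\cdots M^{a_m,(o_m,r_m)}\mathbf{1}$, so the Hankel matrix factors through $\mathbb{R}^{|S|}$ and its rank is at most $|S|$. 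The paper reaches the same conclusion more directly and avoids the trap entirely: it factors $P_{\mathcal{T,H}} = P(\mathcal{H})BD$, where row $i$ of $B$ is the belief state at history $h_i$ and $D_{ij} = p(t_j\mid s_i)$ is the prediction of the (reward-augmented) test from nominal state $s_i$. Since conditioning on the current nominal state determines the law of all future action-observation-reward sequences, $D$ is well defined no matter what the rewards depend on, and $\mathrm{rank}(P_{\mathcal{T,H}}) \le \min(\mathrm{rank}\,B,\ \mathrm{rank}\,D) \le |S|$. In short: factor predictions through the belief state itself, not through a manufactured observation function.
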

\begin{proof}
Construct a ($\mathcal{H} \times |S|$) matrix $B$ and a ($|S| \times \mathcal{T}$) matrix $D$, where $|S|$ is the number of states in the \textbf{minimal} POMDP model, row $i$ of $B$ is the belief-state corresponding to history $h_i$, column $j$ of $D$ is a column vector that contains the prediction of test $t_j$ at each nominal-state so $D_{ij}=p(t_j|s_i)$. Then, $P_{\mathcal{T,H}}$ can be calculated as:$P_{\mathcal{T,H}} = P(\mathcal{H})P(\mathcal{T|H})=P(\mathcal{H})BD$ and the rank of $P_{\mathcal{T,H}}$ is upper bounded by ranks of $B$ and $D$, which is no more than $|S|$.
\end{proof}

In practice, for larger scale systems, according to the computation power and requirement of time limitation on online planning, we can select an appropriate size of $k$ to learn an approximate PSR model to make the computations of $b(hao)$ and the generation of next observation $o$ at each step affordable, as we can also compute $b_{\infty}^TB_{ao}$~($\forall a \in A$ and $o \in O$) offline to reduce the online computation time, this enables the possible application of the proposed approach into larger scale systems.

As for systems that can be represented by a finite POMDP $M = (S,A,T,R,O,Z)$, a PSR $\tilde{M} = (A,\tilde{R},O,B)$ also exists~\cite{Littman01predictiverepresentations}, where $\tilde{R}_h^a=\sum_{s \in S} \mathrm{B}(s,h)R_s^a$ and $\mathrm{B}(s,h)$ is the belief state. Also, $\forall$ $h$, $a \in A$ and $o \in O$, $Pr[o||ha] = b_{\infty}^TB_{ao}b(h) = \mathrm{b}^T(h)T^aZ^{ao}1^n$~\cite{Littman01predictiverepresentations,Boots-2011b}.  Following we will prove the convergence of our proposed approach to the optimal value function. The main steps of these proofs are similar to those in~\cite{Silver10monte-carloplanning}.
\begin{lem}
\label{Lem:val}
Given a POMDP $M$, consider the PSR model $\tilde{M}$ of the same system, the value function $\tilde{V}^\pi(h)$ of the PSR is equal to the value function $V^\pi(h)$ of the POMDP.
\end{lem}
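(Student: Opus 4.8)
The plan is to exhibit $V^\pi$ and $\tilde V^\pi$ as fixed points of one and the same Bellman operator over histories, and then invoke uniqueness of that fixed point. First I would write the POMDP value function of a history-based policy $\pi$ in its belief-MDP form,
\begin{equation}
V^\pi(h) = \sum_{s\in S}\mathrm{B}(s,h)R_s^{\pi(h)} + \gamma\sum_{o\in O}Pr[o||h\pi(h)]\,V^\pi(h\pi(h)o),
\end{equation}
which is the standard reduction of a POMDP to an MDP over belief states, and the analogous recursion for the PSR,
\begin{equation}
\tilde V^\pi(h) = \tilde R_h^{\pi(h)} + \gamma\sum_{o\in O}Pr[o||h\pi(h)]\,\tilde V^\pi(h\pi(h)o).
\end{equation}

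Next I would identify the two recursions term by term. The immediate-reward terms agree because $\tilde R_h^a=\sum_{s\in S}\mathrm{B}(s,h)R_s^a$ by the definition of the PSR reward recalled just above this lemma. The one-step observation probabilities agree because $Pr[o||ha]=b_\infty^TB_{ao}b(h)=\mathrm{b}^T(h)T^aZ^{ao}1^n$, the PSR/POMDP identity already established. Hence both value functions are fixed points of the same operator $\mathcal{T}^\pi$ acting on functions of histories, and since $0\le\gamma<1$ with bounded rewards, $\mathcal{T}^\pi$ is a $\gamma$-contraction in the supremum norm and so has a unique fixed point; this forces $\tilde V^\pi(h)=V^\pi(h)$ for every $h$. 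Equivalently, I would prove $V^\pi_H=\tilde V^\pi_H$ for every finite horizon $H$ by induction on $H$, where the base case $H=0$ is trivial and the inductive step substitutes the two identities above, and then let $H\to\infty$ so both truncations converge to the respective infinite-horizon values.

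The main obstacle is justifying the first display rigorously, that is, that the POMDP value of a history-based policy really does admit this history recursion with immediate reward $\sum_{s}\mathrm{B}(s,h)R_s^a$ and transition kernel $Pr[o||ha]$, where $\mathrm{B}(\cdot,hao)$ is the Bayes-updated belief. The hard part will be confirming the consistency between the belief update in the POMDP and the state update of Equ.~\ref{equ:nextB} in the PSR: I must verify that marginalizing $R_s^a$ against the propagated belief at each step reproduces $\tilde R$ along every reachable history, and that the observation marginals induced by $\pi$ compose identically in the two models. Both follow from the belief-state sufficiency of the POMDP together with the matching one-step quantities, but establishing this consistency is precisely what makes the term-by-term identification of the two operators legitimate rather than merely formal, after which the contraction argument closes the proof.
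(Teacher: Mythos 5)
Your proposal is correct and rests on the same two identities the paper's proof uses: $\tilde{R}_h^a=\sum_{s}\mathrm{B}(s,h)R_s^a$ for the immediate rewards and $Pr[o||ha]=b_\infty^TB_{ao}b(h)=\mathrm{b}^T(h)T^aZ^{ao}1^n$ for the one-step observation probabilities. The difference lies in how the recursion is closed. The paper's proof is a single chain of equalities: it expands $V^\pi(h)$ over states, collapses the state sums to get $\sum_{a,o}\pi(h,a)(\tilde{R}_h^a+\gamma Pr[o||ha]V^\pi(hao))$, rewrites $Pr[o||ha]$ in PSR form, and then simply replaces $V^\pi(hao)$ by $\tilde{V}^\pi(hao)$ at the successor histories --- that is, it invokes at the successor level exactly the equality being proved, leaving the inductive or limiting justification implicit. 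Your version supplies precisely that missing step: once both $V^\pi$ and $\tilde{V}^\pi$ are exhibited as fixed points of the same history-space operator $\mathcal{T}^\pi$, the $\gamma$-contraction property (or, equivalently, your finite-horizon induction followed by $H\to\infty$) forces them to coincide, which is what makes the term-by-term identification into an actual proof rather than a formal substitution. So your route buys rigor where the paper is terse, at the cost of introducing the fixed-point machinery. One cosmetic discrepancy: the paper works with stochastic policies $\pi(h,a)$ and sums over actions, whereas you wrote a deterministic $\pi(h)$; your argument goes through verbatim with the action sum restored.
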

\begin{proof}
$V^\pi(h)=\sum\limits_{s \in S}\sum\limits_{a \in A}\sum\limits_{s' \in S}\sum\limits_{o \in O}\mathrm{B}(s,h)\pi(h,a)(R_s^a+\gamma T^a_{ss'}Z_{s'o}^aV^\pi(hao))=\sum\limits_{a \in A}\sum\limits_{o \in O}\pi(h,a)(\tilde{R}_h^a +\gamma Pr[o||ha]V^\pi(hao))=\sum\limits_{a \in A}\sum\limits_{o \in O}\pi(h,a)(\tilde{R}_h^a+\gamma b_{\infty}^TB_{ao}b(h)\tilde{V}^\pi(hao))=\tilde{V}^\pi(h).$
\end{proof}
Assume two distributions, $D_\pi(h_T)$ and $\tilde{D}_\pi(h_T)$,  where $D_\pi(h_T)$ is the POMDP rollout distribution and $\tilde{D}_\pi(h_T)$ is the PSR rollout distribution. For $D_\pi(h_T)$, it is the distribution of histories generated by sampling an initial state $s_t \sim \mathrm{B}(s,h_t)$, and then repeatedly sampling actions from policy $\pi(h,a)$ and sampling states, observations and rewards from $M$, until termination at time $T$~\cite{Silver10monte-carloplanning}. $\tilde{D}_\pi(h_T)$ is the distribution of histories generated by starting $h_t$, and then repeatedly sampling actions from policy $\pi(h,a)$ and sampling observations and rewards from $\tilde{M}$, until termination at time $T$.
\begin{lem}
\label{Lem:dis}
For any rollout distribution, the PSR rollout distribution is equal to the POMDP rollout distribution, i.e., $\forall \pi, D_\pi(h_T)=\tilde{D}_\pi(h_T)$.
\end{lem}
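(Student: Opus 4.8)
The plan is to prove the claim by induction on the length of the rollout, i.e., on the number $\tau - t$ of action-observation steps taken since the common starting history $h_t$. Both distributions place all of their initial mass on the same history $h_t$, so the base case $D_\pi(h_t) = \tilde{D}_\pi(h_t) = 1$ is immediate. For the inductive step I would write any length-$(\tau+1-t)$ history as $h_{\tau+1} = h_\tau a o$ and factor each distribution one step at a time.

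The key preliminary observation is that in both models the distribution over the observable history factorizes into a product of per-step terms of the form $\pi(h,a)\cdot Pr[o||ha]$. For the PSR this is immediate from the generative description of $\tilde{D}_\pi$. For the POMDP it requires marginalizing out the latent states: conditioned on having generated the action-observation sequence $h_\tau$, the hidden state $s_\tau$ is distributed according to the belief $\mathrm{B}(\cdot,h_\tau)$, which is a deterministic function of $h_\tau$ and hence independent of the particular sampled state trajectory. Using this, the probability that the POMDP rollout emits $o$ after taking $a$ at $h_\tau$ is $\sum_{s}\sum_{s'}\mathrm{B}(s,h_\tau)T^a_{ss'}Z^a_{s'o} = \mathrm{b}^T(h_\tau)T^aZ^{ao}1^n = Pr_M[o||h_\tau a]$, so the latent sampling collapses exactly to the per-step observation probability.

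With both factorizations in hand, the inductive step reduces to comparing a single step: $D_\pi(h_\tau a o) = D_\pi(h_\tau)\,\pi(h_\tau,a)\,Pr_M[o||h_\tau a]$ and $\tilde{D}_\pi(h_\tau a o) = \tilde{D}_\pi(h_\tau)\,\pi(h_\tau,a)\,Pr_{\tilde{M}}[o||h_\tau a]$. The policy factors agree since the same $\pi$ is used, the prefix factors agree by the inductive hypothesis, and the observation factors agree because the identity $b_\infty^T B_{ao} b(h) = \mathrm{b}^T(h)T^aZ^{ao}1^n$ recorded just before Lemma~\ref{Lem:val} gives $Pr_M[o||ha] = Pr_{\tilde{M}}[o||ha]$ for every $h$, $a$, $o$. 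Multiplying equal factors yields $D_\pi(h_{\tau+1}) = \tilde{D}_\pi(h_{\tau+1})$, carrying the induction up to the termination time $T$. Since the rewards are either encoded into the observations $o$ (and thus covered by the observation factor) or are deterministic functions $reward(ao)$ of the emitted action-observation pair, and since termination (whether by reaching a reward-observation or by a depth/$IsTerminal$ condition) is a deterministic function of the history, neither introduces additional randomness nor disturbs the matching.

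I expect the main obstacle to be the POMDP marginalization step rather than the final algebra: one must argue carefully that the filtering distribution over the hidden state conditioned on the generated history is exactly $\mathrm{B}(\cdot,h_\tau)$, so that the per-step observation probability is path-independent and equals $Pr_M[o||ha]$. Once this sufficiency property of the belief state is invoked, the remaining equality is a one-line consequence of the already-established per-step agreement $Pr_M[o||ha] = Pr_{\tilde{M}}[o||ha]$.
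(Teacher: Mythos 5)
Your proposal is correct and follows essentially the same route as the paper: the paper's proof is exactly the one-step factorization $D^\pi(hao)=D^\pi(h)\pi(h,a)\sum_{s}\sum_{s'}\mathrm{B}(s,h)T^a_{ss'}Z^a_{s'o}=D^\pi(h)\pi(h,a)Pr[o||ha]=\tilde{D}^\pi(h)\pi(h,a)b_{\infty}^TB_{ao}b(h)=\tilde{D}^\pi(hao)$, with the induction on history length and the belief-state marginalization left implicit. You have merely made explicit the induction hypothesis and the sufficiency of $\mathrm{B}(\cdot,h)$, which the paper's compact chain of equalities silently relies on.
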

\begin{proof}
$D^\pi(hao)=\\D^\pi(h)\pi(h,a)\sum_{s\in S}\sum_{s' \in S}\mathrm{B}(s,h)T_{ss'}^aZ_{s'o}^a=D^\pi(h)\pi(h,a)Pr[o||ha]=\tilde{D}^\pi(h)\pi(h,a)b_{\infty}^TB_{ao}b(h)=\tilde{D}^\pi(hao).$
\end{proof}

By Lemma~\ref{Lem:val} and~\ref{Lem:dis}, and according to Lemma 1 and 2 of~\cite{Silver10monte-carloplanning}, we can conclude that the value function $\tilde{V}^\pi(h)$ of the PSR is equal to the value function of the derived MDP with histories as states and the PSR rollout distribution is equal to the derived MDP rollout distribution. As the UCT algorithm converges to the optimal value function in fully observable MDPs~\cite{Kocsis:2006} and with infinite training data, the law of large numbers guarantees the learned PSR model converges to the true PSR model, following Theorem 1 in~\cite{Silver10monte-carloplanning}, Lemma~\ref{lem:converge} holds.
\begin{lem}
\label{lem:converge}
With infinite training data and for suitable choice of $c$, the value function constructed by our approach~(PSR-MCTS) converges in probability to the optimal value function, $V(h)\xrightarrow{p} V^*(h)$, for all histories $h$ that are prefixed by $h_t$.
\end{lem}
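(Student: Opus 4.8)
The plan is to chain together the results already established so that the convergence of PSR-MCTS reduces to the known convergence of UCT on a fully observable MDP. First I would make explicit the derived MDP whose states are histories $h$: its transition and reward structure is induced by the PSR rollout distribution $\tilde{D}_\pi(h_T)$ together with the predictive probabilities $Pr[o||ha]=\hat{b}_\infty^T\hat{B}_{ao}b(h)$. By Lemma~\ref{Lem:val} the PSR value function $\tilde{V}^\pi(h)$ equals the POMDP value $V^\pi(h)$, and by Lemma~\ref{Lem:dis} the PSR rollout distribution coincides with the POMDP rollout distribution; invoking Lemmas~1 and~2 of~\cite{Silver10monte-carloplanning} then identifies both of these with the value function and rollout distribution of the derived history-MDP. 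Hence planning in the (true) PSR model is exactly planning in a fully observable MDP over histories.

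Second I would appeal to the result of Kocsis and Szepesv\'ari~\cite{Kocsis:2006} that UCT, with a suitable exploration constant $c$, converges in probability to the optimal value function on any finite-horizon MDP. Since Algorithms~\ref{Algo:Simulate} and~\ref{Algo:RollOut} implement precisely the UCT tree policy and a rollout policy on the derived history-MDP (sampling observations via Eq.~\ref{equ:obserDistri} and updating $b(h)$ via Eq.~\ref{equ:nextB}), Theorem~1 of~\cite{Silver10monte-carloplanning} applies verbatim to give $V(h)\xrightarrow{p}V^*(h)$ for all histories prefixed by $h_t$, \emph{provided the simulator is the true model}. This is the convergence statement one gets if the PSR parameters $b_*,b_\infty,B_{ao}$ are exact rather than estimated.

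The remaining and genuinely delicate step is to remove the exactness assumption, i.e.\ to account for the fact that PSR-MCTS runs on the learned parameters $\hat{b}_*,\hat{b}_\infty,\hat{B}_{ao}$ rather than the true ones. Here I would use the ``infinite training data'' hypothesis: by the law of large numbers the empirical matrices $\hat{P}_\mathcal{H},\hat{P}_{\mathcal{T,H}},\hat{P}_{\mathcal{T},ao,\mathcal{H}}$ converge to their true counterparts, and, as noted in the excerpt following Eq.~\ref{equ:spectlearn} (citing~\cite{Boots-2011b}), continuity of the spectral construction then forces $\hat{b}_*\to b_*$, $\hat{b}_\infty\to b_\infty$, $\hat{B}_{ao}\to B_{ao}$. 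Consequently the learned observation distribution in Eq.~\ref{equ:obserDistri} and the learned update in Eq.~\ref{equ:nextB} converge to the true ones, so the simulated MDP converges to the true derived history-MDP. I would combine this model-convergence with the UCT-convergence-in-probability to conclude $V(h)\xrightarrow{p}V^*(h)$.

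I expect the main obstacle to lie exactly in this last combination: the clean way to phrase the argument is to note that with infinite data the learned PSR \emph{is} the true PSR (the estimation error vanishes), so the two sources of stochasticity decouple and the earlier UCT argument applies directly; a more careful treatment would require a joint limit controlling how the sampling error in the model interacts with the Monte-Carlo sampling error in the tree, and quantifying propagation of the parameter perturbation through the normalized update of Eq.~\ref{equ:nextB} (whose denominator $\hat{b}_\infty^T\hat{B}_{ao}b(h)$ must stay bounded away from zero) is the technically demanding part. Given the phrasing of the lemma, I would present the decoupled version: treat the infinite-data regime as giving the exact model, and then cite Theorem~1 of~\cite{Silver10monte-carloplanning} to finish.
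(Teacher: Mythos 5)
Your proposal follows essentially the same route as the paper: it chains Lemma~\ref{Lem:val} and Lemma~\ref{Lem:dis} with Lemmas~1 and~2 of~\cite{Silver10monte-carloplanning} to reduce PSR planning to a fully observable derived history-MDP, invokes the UCT convergence of~\cite{Kocsis:2006}, uses the law of large numbers to make the learned PSR parameters exact in the infinite-data limit, and then concludes via Theorem~1 of~\cite{Silver10monte-carloplanning}. The only difference is that you are more explicit than the paper about the delicate step of decoupling the model-estimation limit from the Monte-Carlo sampling limit, which the paper passes over without comment.
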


\section{Experiments}
\noindent{\bf Experimental setting.} We first evaluate the proposed approach in two problems, one is the classical Tiger problem~\cite{Cassandra1994}, the other is the Partially Observable Sysadmin~(POSyadmin)~\cite{Katt2017MCTS}. The same two problems are also the environments used to test the performance of the BA-POMCP approach~\cite{Katt2017MCTS}. For the POSyadmin problem, the agent acts as a system administrator to maintain a network of $n$ computers, which has $2n+1$ actions: 'ping' or ' reboot' any of the computers or 'do nothing', 3 observations: {NULL, failing, working}. The 'ping' action has a cost of 1, while rebooting a computer costs 20 and switches the computer to 'working', each 'failing' computer has a cost of 10 at each time step. The agent doesn't know the state of any computer, and at each time step, any of the computers can 'fail' with some probability $f$~\cite{Katt2017MCTS}. Then to further verify the effectiveness and scalability of the PSR model-based approach, we extend our approach to RockSample(5,5) and RockSample(5,7)~\cite{Ross2011BAPOMDP,Smith:2004:HSV}, both of which are too complex for the BA-POMDP based approaches. In the RockSample($n,k$) domain, a robot is on an $n \times n$ square board, with $k$ rocks on some of the cells. The positions of the robot and the rocks are known. Each rock has an unknown binary quality~(good or bad). The goal of the robot is to gather samples of the good rocks. The state of the robot is defined by the position of the robot on the board and the quality of all the rocks and there is an additional terminal state, reached when the robot moves into exit area, then with an $n \times n$ board and $k$ rocks, the number of states is $n^22^k+1$~\cite{Ross2011BAPOMDP}. Note that in the work of~\cite{Ross2011BAPOMDP}, the RockSample problem is also used but only with 37 states (RockSample(3,2)).

As in our proposed approach, some rewards are treated as observations. For the Tiger problem, besides the two observations of the original domain, the reward 10 for opening the correct door and the penalty $-100$ for choosing the door with the tiger behind it are also treated as observations. For the POSyadmin domain, we added the rewards that indicate the whole status of the network as observations, which provides the information about how many computers have been failed at current step, but we still don't know which computer has been failed, the same rewards were also used in the BA-POMCP based experiments. For the RockSample domains, we added the reward $10$ for sampling a good rock or moving into exit area and the penalty $-10$ for sampling a bad rock as observations.

For our approach, we first learned the PSR model of the underlying system offline, then the PSR model was combined with MCTS as shown in Algorithm~\ref{Algo:PSR-MCTS}$\sim$\ref{Algo:RollOut}. The PSR model can be learned straightforwardly. First, matrices $P_\mathcal{H}$,$P_{\mathcal{T,H}}$ and $P_{\mathcal{T},ao,\mathcal{H}}$ were estimated using the training data, then the model parameters can be computed using Equ.~\ref{equ:spectlearn}. The detail of the training data is as follows: For Tiger, $\mathcal{H}$ includes $200$ randomly generated trajectories, each containing $6$ action-observation pairs; $\mathcal{T}$ contains all the possible two-step action-observation pairs. For POSyadmin with $3/6$ computers, $\mathcal{H}$ includes $300/1000$ trajectories, and each containing $8/14$ action-observation pairs; $\mathcal{T}$ contains all the possible two-step/one-step action-observation pairs. For Rocksample(5,5)/Rocksample(5,7), $\mathcal{H}$ includes $600/7000$ trajectories, and each containing $20/23$ action-observation pairs; $\mathcal{T}$ contains all the possible two-step action-observation pairs. $p(t|h)$ of all matrices is estimated by executing the action-sequence of $t$ $50$ times. Nearly same amount of training data was used for the BA-POMCP approach.

\begin{figure*}[!ht]
\begin{minipage}{6.5in}
\begin{minipage}{3in}
\includegraphics[width=0.85\textwidth]{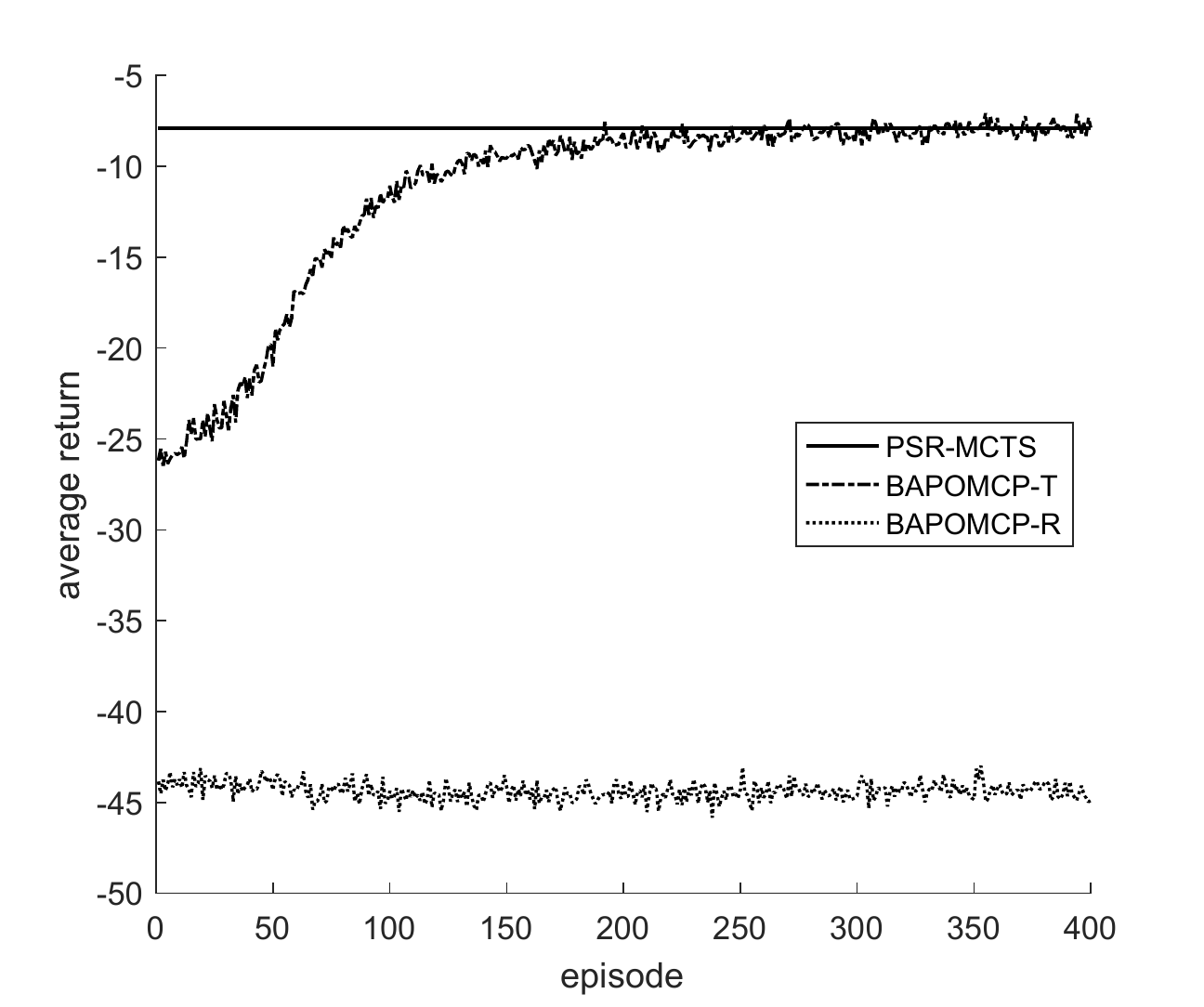}
\centerline{{\small ($a$)}}
\end{minipage}
%\centerline{{\small ($b$)}}
\begin{minipage}{3in}
\includegraphics[width=0.85\textwidth]{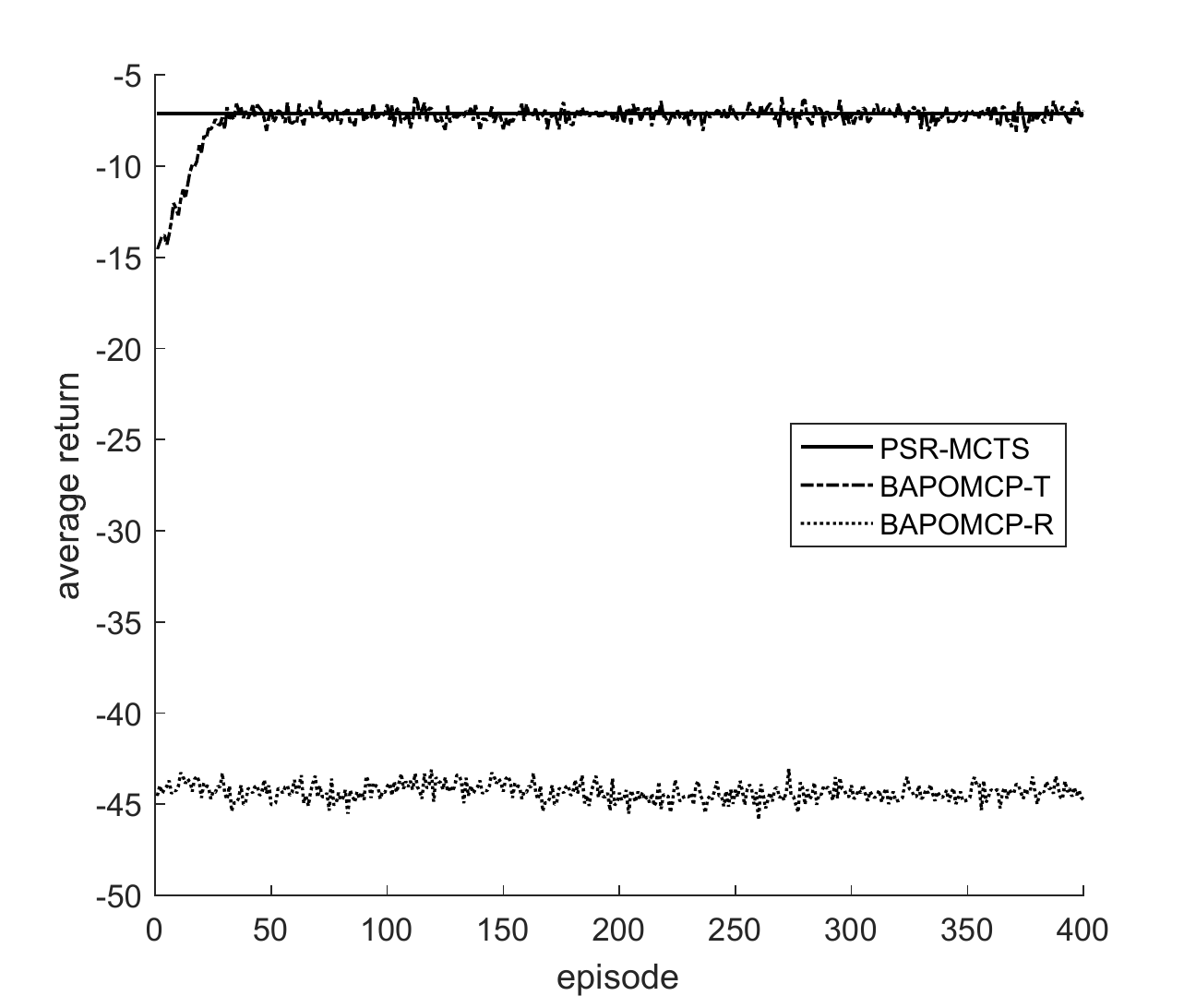}
\centerline{{\small ($b$)}}
\end{minipage}
\centering
\caption{{\small Average return for ($a$) 1000; ($b$) 10000 simulations on {\em Tiger}.}}
\label{fig:tiger}
\end{minipage}
\end{figure*}

\begin{figure*}[!ht]
\centering
\begin{minipage}{6.5in}
\begin{minipage}{3in}
\centering
\includegraphics[width=0.85\textwidth]{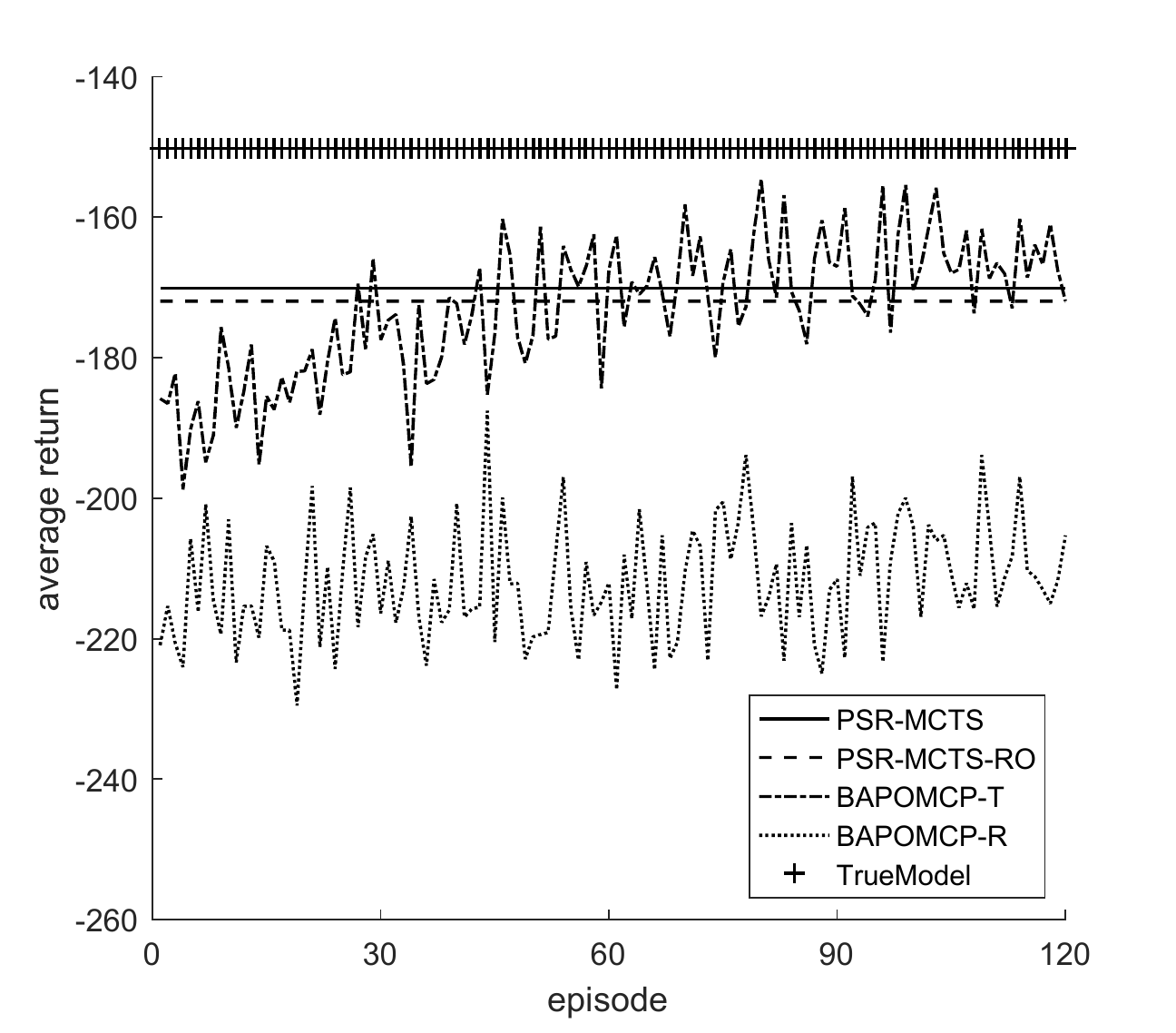}
\caption{{\small Average return on {\em POSyadmin}.}}
\label{fig:posyadmin3}
\end{minipage}
\begin{minipage}{3in}
\centering
\includegraphics[width=0.85\textwidth]{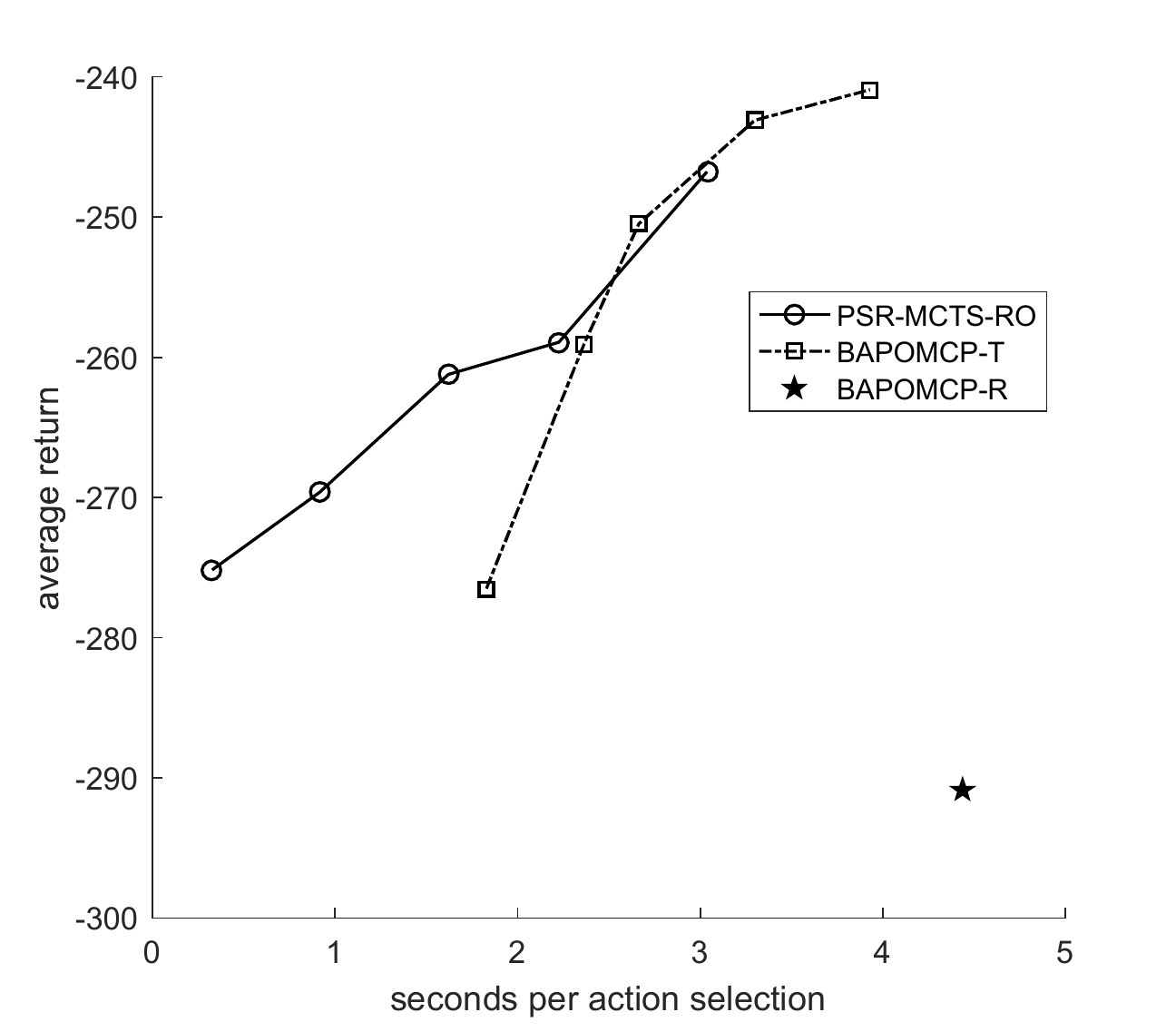}
\caption{{\small Average return per action selection time on {\em POSyadmin}.}}
\label{fig:posyadmin6}
\end{minipage}
\end{minipage}
\end{figure*}

\begin{figure*}[!ht]
\begin{minipage}{6.5in}
\begin{minipage}{3in}
\includegraphics[width=0.85\textwidth]{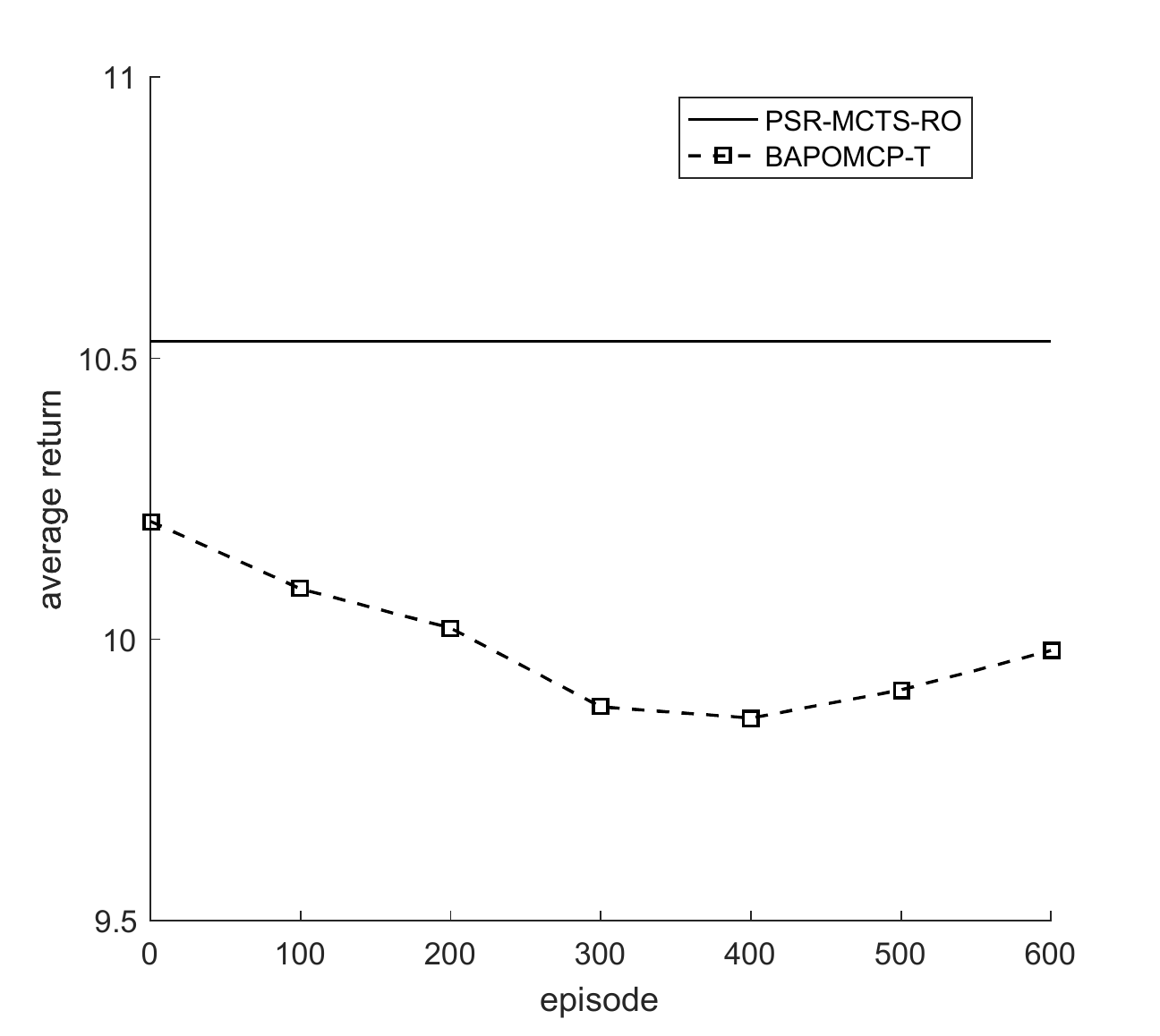}
\centerline{{\small ($a$)}}
\end{minipage}
%\centerline{{\small ($b$)}}
\begin{minipage}{3in}
\includegraphics[width=0.85\textwidth]{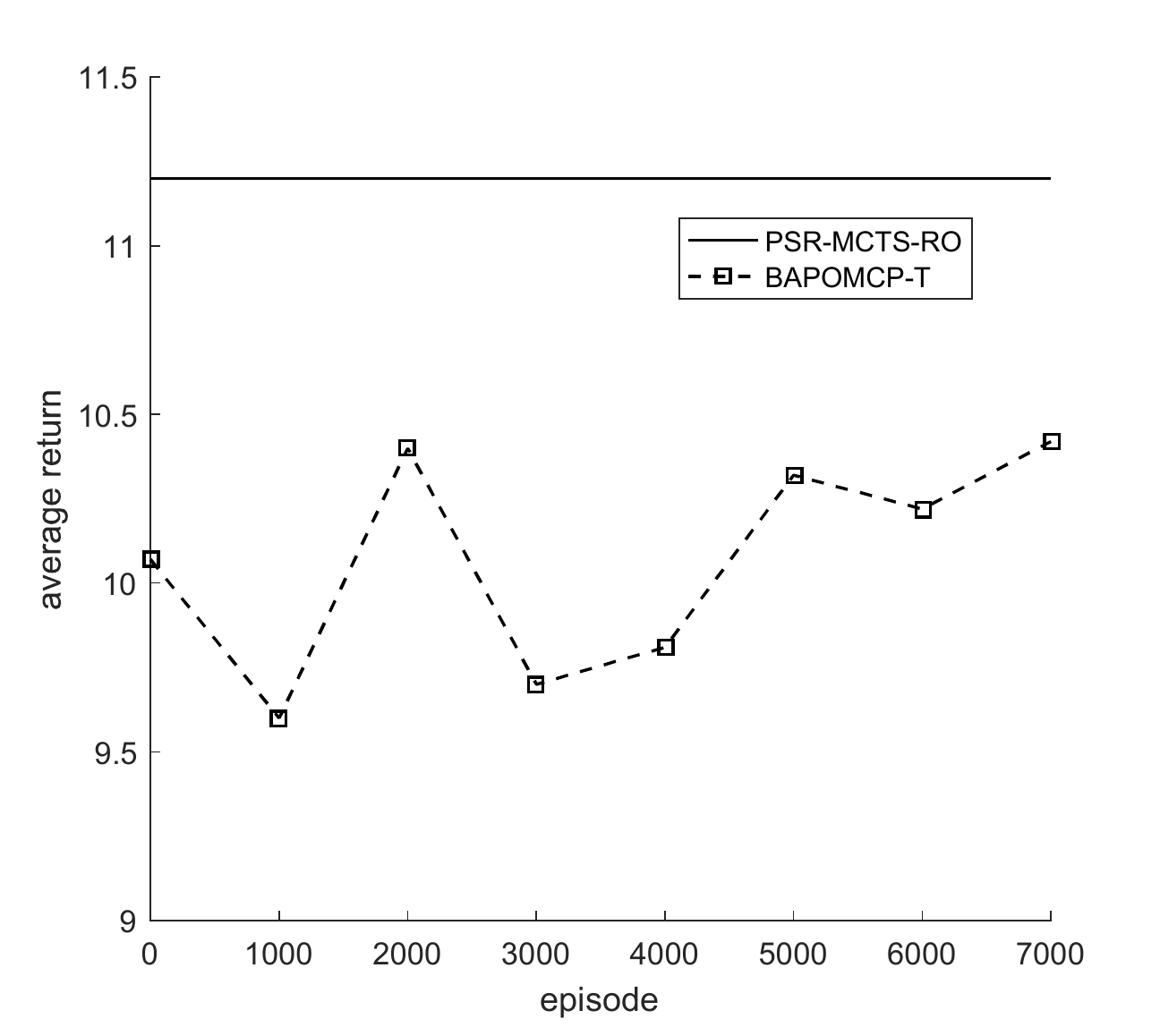}
\centerline{{\small ($b$)}}
\end{minipage}
\centering
\caption{{\small Average return on ($a$) {\em Rockample(5,5)}; ($b$) on {\em Rocksample(5,7)}.}}
\label{fig:rocksample}
\end{minipage}
\end{figure*}

\noindent{\bf Evaluated methods}. BA-POMCP is the most practical and state-of-the-art solution for BA-POMDPs~\cite{Katt2017MCTS,Katt2018}, however the performance of the BA-POMCP approach relies heavily on knowing the knowledge of the underlying system, for example, in the work of \cite{Katt2017MCTS}, for Tiger, the transition model is assumed to be correct and the initial observation function is assumed to be nearly correct; for POSyadmin, the observation function is assumed known a prior, and the initial transition function is assumed to be nearly correct; for RockSample, in the work of~\cite{Ross2011BAPOMDP}, the transition model is assumed to be correct and the initial observation function is assumed to be nearly correct. In practice, for many, if not most domains, such knowledge can be hardly known a prior.

To evaluate our method, for Tiger and POSyadmin, we firstly compared our approach to the BA-POMCP method under the same conditions~(BAPOMCP-R), that is, no prior knowledge is provided to the BA-POMCP approach, then our approach was compared to the BA-POMDP approach with the nearly correct initial models~(BAPOMCP-T) as mentioned previously. \iffalse For the POSyadmin domain, we also tested the performance of the PSR-MCTS approach with offline Monte-Carlo tree~(PSR-MCTS-OLT), where an offline Monte-Carlo tree was built using the training data and treated as the initial knowledge of the online MCTS. \fi As for some large domains, the computation of every observation at each step is time-consuming, which may not meet the requirement of online planning, also the calculation of $P_{\mathcal{T},ao,\mathcal{H}}$ and $B_{ao}$ for all $a \in A$ and $o \in O$ may be computation-expensive. We further verified the performance of the PSR-MCTS approach in the case of reduced number of observations~(PSR-MCTS-RO), where only the $P_{\mathcal{T},ao,\mathcal{H}}$ and $B_{ao}$ for the $ao$ appeared in the training data were computed, and in the online planning process, when observation received from the world didn't belong to the set of observations appeared in the training data, an observation was randomly selected from this set and used for state update. For the RockSample problem, our approach~(PSR-MCTS-RO) was compared to the BA-POMCP approach with the nearly correct initial models~(BAPOMCP-T).

Same parameters were used as in the work of \cite{Katt2017MCTS}. For Tiger, POSyadmin and RockSample, the maximum number of decision steps for the agent is set to 20, 20, and 30 respectively. For the PSR-MCTS approach, for POSyadmin, the rank of PSR is set to 50; for RockSample(5,5)/RockSample(5,7) the rank of PSR is set to 70/75~(for Tiger, the rank is just the rank calculated from matrix $\hat P_\mathcal{T,H}$).

\noindent{\bf Performance evaluation.} Figure~\ref{fig:tiger} plots the average return over 10000 runs with 1000 and 10000 simulations on Tiger. Note that for both the Tiger and POSyadmin~(reported below) domains, for the BAPOMCP-R approach, nearly no improvement has been achieved with the increase of episodes and for the BAPOMCP-T approach, with the increase of the episodes, the performance becomes stable. For the PSR-MCTS approach, as mentioned previously, an offline model was first learned and no model learning is needed in the online planning process, the result reported is the average return and is shown as a line in the figures. As can be seen from the results, in all cases, when no prior knowledge is provided, the PSR-MCTS approach performs significantly better than the BA-POMCP approach. Even compared to the BA-POMCP approach with nearly correct initial models, the PSR-MCTS approach with no prior knowledge is still competitive, and different from our approach that no prior knowledge is required, prior knowledge plays a very important role for learning a good policy for the BA-POMCP approach.

Experimental results for five approaches on the (3-computer) POSysadmin problem are shown in Figure~\ref{fig:posyadmin3}, where 100 simulations per step were used. The PSR-MCTS related approaches still perform significantly better than the BAPOMCP-R approach and achieve nearly the same performance of the BAPOMCP-T approach. In Figure~\ref{fig:posyadmin3}, we also reported the result of the BA-POMCP approach given the completely accurate model of the underlying system, which also demonstrate the good performance of our approach. Even the PSR-MCTS-RO approach has a very good performance along with the significantly decreased average per action selection time, where the average per action selection time for BAPOMCP-R, BAPOMCP-T, PSR-MCTS, and PSR-MCTS-RO is about 0.03s, 0.02s, 0.45s and 0.19s respectively. \iffalse Although the BA-POMCP related approaches have a very good performance in terms of efficiency, as shown in the following experiment, such an advantage doesn't exist with the increase of the complexity of the underlying systems.\fi However, as shown in the following experiment, the advantage for the BA-POMCP based approaches in terms of efficiency doesn't exist with the increase of the complexity of the underlying systems.

Figure~\ref{fig:posyadmin6} shows the average return over 100 runs with 100, 300, 500, 700 and 1000 simulations for the PSR-MCTS-RO and BAPOMCP-T approaches on POSyadmin with 6 computers. For the BAPOMCP-R approach, only the results of 100 simulations are given~(the single dot in the lower right corner) as the per action selection time for 100 simulations has reached more than 4 seconds while the return is much lower than the PSR-MCTS-RO approach. The results also show that compared to the BAPOMCP-T approach, the performance of the PSR-MCTS-RO approach is still good while with lower per action selection time at all cases. The explanation is that as for the BA-POMCP approaches, at each decision step, after taking an action and receiving an observation, the particle filter technique is used for approximating the next belief state, however, the obtaining of a next particle state that corresponding to the received observation is time-consuming for lager scale systems as only when the observation computed from a state and the corresponding model maps the real observation, this state can be used as the particle of next belief state. Note no results of the adaptations of the BA-POMCP approach is reported, as compared to the original BA-POMCP, the belief state update process with a random expected model for larger systems even needs more time and during the execution, no reasonable per action selection time required for online planning can be obtained. %is too time consuming to have a reasonable per action selection time required for online planning.

Figure~\ref{fig:rocksample} plots the average return over 1000 runs with 1000 simulations on RockSample(5,5) and RockSample(5,7). For such scale systems, the state size of the BA-POMDP model is intractable large with the increase of the time step. For the BA-POMDP based approaches, we may not even be able to store and initialize the state transaction matrices. For the comparison, as used in the work of \cite{Ross2011BAPOMDP}, only the dynamics related to the check action were modeled via the BA-POMDP approach, for the others, the black box simulation of the exact model was used to generate the simulation experiments and for state representation and updating~(BAPOMCP-T). Even under such conditions, as can be seen from the experimental results, the PSR-MCTS-RO method with no prior knowledge provided still achieved better performance compared with the BAPOMDP-T approach, and nearly no improvement has been achieved for the BAPOMCP-T approach with the arriving of new training data. The average per action selection time for PSR-MCTS-RO/BAPOMCP-T for RockSample(5,5) and RockSample(5,7) is about 0.93s/0.16s and 1.22s/1.16s respectively. The reason that less time has been used for the action selection of the BAPOMCP-based approaches is that the model representation of the BAPOMCP-based approaches in the experiment is not completely BA-POMDP based, and as mentioned earlier, a large part of the model in the related approaches is represented based on a black box. However, it can be seen that the action selection time of the BA-POMDP-based approaches is still highly affected by the scale of underlying system.

\section{Related Works}
Within the AI community, much attention has been devoted to solving the partially observable problem, i.e., the problem of planning under uncertainty. POMDPs provide a rich mathematical framework to solve it~\cite{Kaelbling98planningand,Ross2008OnlinePOMDP,Silver10monte-carloplanning,Ye2017}, however, most of the related algorithms assume the accurate POMDP models of the underlying systems are known a priori. And it is also known that learning offline POMDP models using some EM-like methods is very difficult and suffers from local minima, moreover the POMDP learning approaches usually presuppose knowledge of nature of the unobservable part of the world, which may be unrealistic in many real-world applications~\cite{Ye2017}. As an alternative, Predictive State Representations~(PSRs) provide a powerful framework for modelling partially observable and stochastic systems by only using observable quantities. Much effort has been devoted to learning offline PSR models. In the work of Boots {\em et al.}~\cite{Boots-2011b}, the offline PSR model is learned by using spectral approaches and under some assumptions, the spectral learning of PSRs has been proven to  be statistically consistent. Hamilton {\em et al.}~\cite{JMLR:hamilton14} presented the compressed PSR models, and the technique learns approximate PSR models by exploiting a particularly sparse structure presented in some domains, which allows for an increase in both the efficiency and predictive power.

When the model of the underlying system is available, model-based planning approaches offer a principled framework for solving the problem of choosing optimal actions in partially observable stochastic domains,e.g., in the work of \cite{Ye2017}, to overcome the challenges of ``curse of dimensionality'' and the ``curse of history'', the Determinized Sparse Partially Observable Tree~(DESPOT), a sparse approximation of the standard belief tree, for anytime online planning under uncertainty, was introduced, which focuses online planning on a set of randomly sampled scenarios and compactly captures the ``execution'' of all policies under these scenarios. However, as mentioned, most of the related methods assume an accurate model of the underlying system to be known a prior~(Note that rather than using MCTS, the DESPOT for the lookahead search can also be directly incorporated into our proposed framework for online planning). The BA-POMDP approach tackles this problem by using a Bayesian approach to model the distribution of all possible models and allows the models to be learned during execution~\cite{Ross2011BAPOMDP}, which has generated substantial interest in the literature~\cite{Ghavamzadeh2015BRLSurvey}. Unfortunately, BA-POMDPs are limited to some trivial problems as the size of the state space over all possible models is too large to be tractable for non-trivial problems~\cite{Katt2017MCTS,Katt2018}. In the PSR literature, in the work of~\cite{JMLR:hamilton14}, a compressed PSR~(CPSR) model is firstly learned and then the learned CPSR model is combined with Fitted-Q for planning. However, as mentioned, CPSR can be only applied to domains with a particularly sparse structure and some prior knowledge, e.g., domain knowledge, is still required.

With the benefits of online and sample-based planning for solving larger problems~\cite{Kearns2002,Ross2008OnlinePOMDP,Silver10monte-carloplanning,Ye2017}, some approaches have been proposed to solve the BA-POMDP model in an online manner. In the work of~\cite{Ross2011BAPOMDP}, an online POMDP solver is proposed by focusing on finding the optimal action to perform in the current belief of the agent. Katt et al.~\cite{Katt2017MCTS,Katt2018} extend the Monte-Carlo Tree Search method POMCP to BA-POMDPs, results in the state-of-the-art framework for learning and planning in BA-POMDPs. In the work of \cite{Katt2018}, a Factored Bayes-Adaptive POMDP model is introduced by exploiting the underlying structure of some specific domains. While these approaches show promising performance on some problems, like other Bayesian-based approaches in the literature, the performance is very dependent on the prior knowledge.

Given the accurate model of the environment to be known a prior, combining approximate offline and online solving approaches is an efficient way to tackle large POMDPs by using offline algorithms to compute lower and upper bounds on the optimal value function~\cite{Ross2007}. For the fully observable domains, in the work of \cite{Gelly2007}, offline and online value functions are combined in the UCT algorithm, where the offline value function is learned by using the $TD(\lambda)$ algorithm~\cite{Sutton1988} and used as prior knowledge in the UCT search tree, experimental results in a $9 \times 9$ Go program~(MoGo) demonstrates the effectiveness of such a combination.

\section{Conclusion and Future Work}
In this paper, we presented PSR-MCTS, a method for planning from scratch with model uncertainty, where an offline PSR model were firstly learned and then combined with online Monte-Carlo tree search. Through theoretical analysis and experiments, and by comparing to the state-of-the-art approach in the literature, we showed the effectiveness and efficiency of the proposed approach, moreover, our approach is more practical than other prior knowledge required approaches in many real-world applications. The modification of the original PSR-MCTS approach that using only a specific set of observations for model state updating are also proposed and tested. The effectiveness and scalability of our proposed approach are also tested on RockSample(5,5) and RockSample(5,7), which are infeasible for BA-POMDP based approaches. To our knowledge, our proposed approach is the first/only technique that have achieved an acceptable performance on the problem of planning with model uncertainty with no prior knowledge provided.

Future work includes developing more efficient techniques for state update and observation computation, and it is also interesting to apply online learning spectral methods for the PSR model learning, where the parameters of the PSR model can be updated during execution as the BA-POMDP approach has done.

\acks{This work was supported by the National Natural Science Foundation of China (No.61772438 and No.61375077).}

\vskip 0.2in
\bibliography{sample}
\bibliographystyle{theapa}

\end{document}